\newtheorem{theorem}{Theorem}
\newcolumntype{C}[1]{>{\centering\arraybackslash}p{#1}}
\newcolumntype{P}{>{\centering\arraybackslash}p{2.5em}}
\title{Domain Representative Keywords Selection: A Probabilistic Approach}
\author{Pritom Saha Akash$^{1}$ $\quad$ Jie Huang$^{1}$ $\quad$ Kevin Chen-Chuan Chang$^{1}$ $\quad$ \\ {\bf Yunyao Li$^{2}$\thanks{ \textsuperscript{\textasteriskcentered} This work was done while the author was at IBM Research, USA.} } $\quad$ {\bf Lucian Popa$^{3}$} $\quad$ {\bf ChengXiang Zhai$^{1}$} \\
 $^1$University of Illinois at Urbana-Champaign, USA \\
 $^2$Apple, USA\\
 $^3$IBM Research, USA \\
 \texttt{\{pakash2, jeffhj, kcchang, czhai\}@illinois.edu} \\
 \texttt{yunyaoli@apple.com, lpopa@us.ibm.com}
}
\begin{document}
\maketitle
\begin{abstract}
We propose a probabilistic approach to select a subset of a \textit{target domain representative keywords} from a candidate set, contrasting with a context domain. Such a task is crucial for many downstream tasks in natural language processing. To contrast the target domain and the context domain, we adapt the \textit{two-component mixture model} concept to generate a distribution of candidate keywords. It provides more importance to the \textit{distinctive} keywords of the target domain than common keywords contrasting with the context domain. To support the \textit{representativeness} of the selected keywords towards the target domain, we introduce an \textit{optimization algorithm} for selecting the subset from the generated candidate distribution. We have shown that the optimization algorithm can be efficiently implemented with a near-optimal approximation guarantee. Finally, extensive experiments on multiple domains demonstrate the superiority of our approach over other baselines for the tasks of keyword summary generation and trending keywords selection.\footnote{Code and data are available at \href{https://github.com/pritomsaha/keyword-selection}{https://github.com/ pritomsaha/keyword-selection}}
\end{abstract}
\section{Introduction}
\label{sec:introduction}
\emph{Domain representative keywords} are the core knowledge of a \emph{target domain} of interest. A target domain can be a broad area of science like \textit{computer science (CS)} or its sub-field \textit{artificial intelligence (AI)}. 
Acquiring domain representative keywords benefits various natural language processing (NLP) tasks such as information summarization, organization, and extraction. For instance, acquiring a set of domain representative keywords is an important first step in organizing domain knowledge with a taxonomy of keywords \cite{zhang2018taxogen}. Moreover, tagging documents \cite{chen2017doctag2vec} with domain representative keywords helps to facilitate search or recommendation in a domain. For another example, summarizing a domain using its trending keywords for a specific time frame helps researchers get a snapshot of research trends or emerging areas of interest, e.g., new emerging security vulnerabilities.

\begin{figure}
\centering
\centerline{\includegraphics[width=\linewidth]{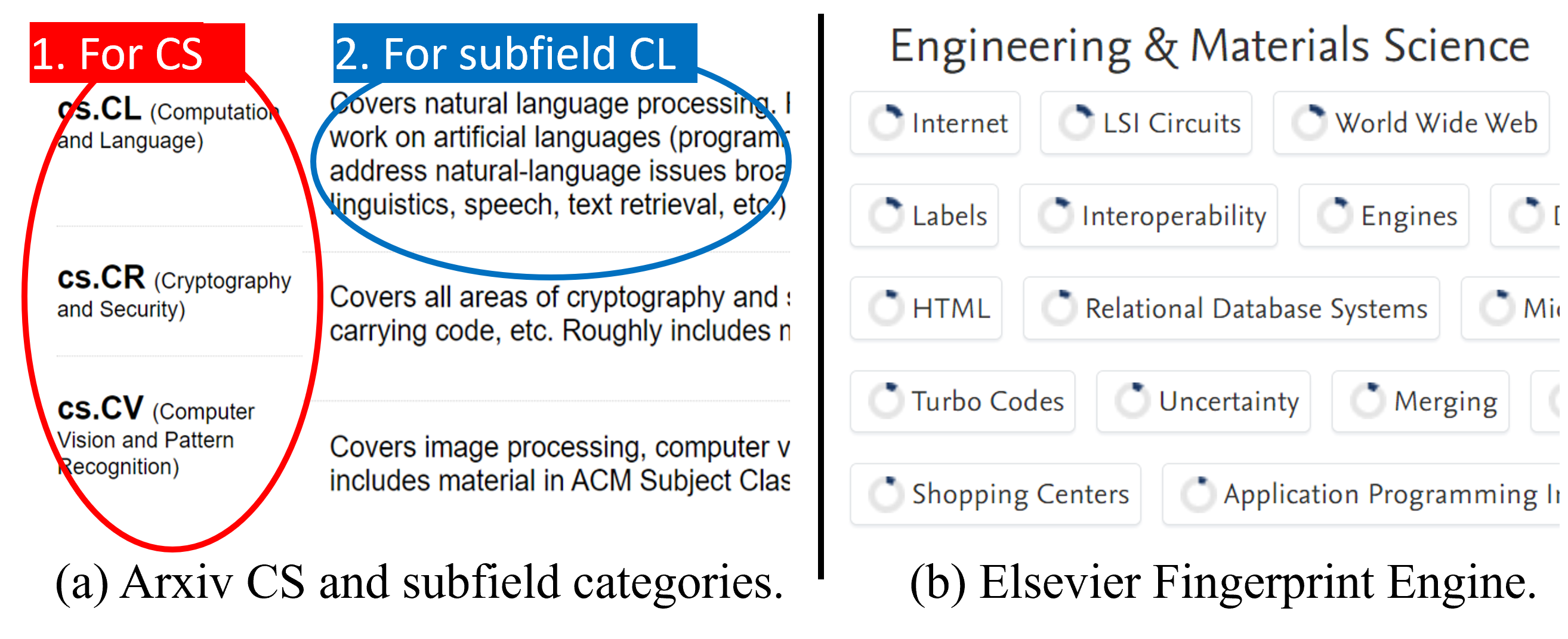}}
\caption{Screenshots of example applications.}
\label{fig:app}
\end{figure}
In reality, while representing a domain, the desired keywords often depend on a given \emph{context domain}. E.g., if we are interested in representing the CS domain with the context of general knowledge (all areas of knowledge), keywords like \textit{model}, \textit{data}, \textit{information} make sense in distinguishing CS from general knowledge. However, if the context is general science, those keywords are not \textit{distinctive} enough to distinguish CS from other areas like mathematics or physics. Instead, the keywords \textit{machine learning}, \textit{data mining}, \textit{deep learning} make more sense in this case. Therefore, it is important to contrast with a known context domain while representing a target domain, but most of the existing work ignored this. An application of this is shown in Fig. \ref{fig:app} (a) from the \textit{arxiv category taxonomy}\footnote{\href{https://arxiv.org/category_taxonomy}{https://arxiv.org/category\_taxonomy}}. We can see a shift of categories from general to more specific subcategories of research areas:  CS $\rightarrow$ CL. Knowing CS categories (partially shown in (1)) as the context, the keywords specified in (2) (i.e., \textit{speech}, \textit{text retrieval}) are more appropriate to represent CL than keywords overlapped with other CS categories. \looseness=-1

Moreover, the number of keywords that need to be selected depends on the nature of the applications. E.g., users are interested in a quick, high-level overview with fewer keywords while summarizing a particular domain. On the other hand, while building a controlled vocabulary representative of a certain domain, the number of keywords is naturally large. An application of the controlled vocabulary is illustrated in Figure \ref{fig:app} (b). It shows the fingerprint visualization of a CS researcher generated by Elsevier Fingerprint Engine\footnote{\href{https://www.elsevier.com/solutions/elsevier-fingerprint-engine}{https://www.elsevier.com/solutions/elsevier-fingerprint-engine}}, a system for research profiling. The researcher's profile is summarized using keywords in \textit{Engineering \& Material Science} domain. However, we can see that some non-representative keywords like \textit{labels} and \textit{merging} are used in the summarization. Therefore, having a representative controlled vocabulary for each domain will facilitate this application for expressively representing a researcher profile.

We thus propose the \textbf{problem} of \textit{domain representative keywords selection}. As input, we are given a set of candidate keywords, a target and a context domain represented by their corresponding corpora, and a size $k$. As output, we aim to select a subset consisting of $k$ keywords from the given candidate set such that the subset best represents the target domain contrasting with the context domain. Here, we assume that the candidate keywords are from the target domain and can be implicitly extracted from the given target domain corpus or externally given keywords for that domain.

From the above problem and discussion, we have identified that the solution for the problem needs to meet the following two \textbf{requirements}:
(1) the selected keywords should be distinctive to the target domain contrastive with a context domain;
(2) the selected keywords should represent the target domain as a whole within the specific size constraint.
None of the existing work satisfies all of them. 
Previously, research has been conducted on automatic keyword extraction \cite{hatty2017evaluating,meng2017deep, alzaidy2019bi,wang2020mining} and phrase mining \cite{liu2015mining,shang2018automated}. However, their main focus is to extract terms from single/multiple documents without considering whether the extracted terms are distinctive to a target domain contrastive with a context.
There is also some previous research \cite{liu2015mining,shang2018automated,lu2019concept,huang2021measuring} that tries to find fine-grained domain-specific keywords from the text. However, these approaches mostly rank keywords based on their specificity to a corpus (or domain) rather than selecting a predefined number of keywords with a global objective of representing the target domain. Therefore, in this work, we propose a solution to satisfy all the specified requirements.

The first \textbf{challenge} on fulfilling the requirements is \textit{contrasting the target and context domains}. Among candidate keywords, the distinctive keywords may have similar corpus statistics (i.e., frequency from target domain corpus) with many non-distinctive popular keywords. Therefore, simply filtering out highly frequent keywords may lose many distinctive keywords for a target domain. Instead, it is more intuitive to say that the keywords that frequently appear in both target and context corpora are often not distinctive keywords for the target domain. It inspires us to leverage the two-component mixture model (MM) \cite{zhai2001model} concept to generate the candidate keywords distribution contrasting with the context domain. 
As far as we know, this is the first work to utilize a mixture model mechanism for keywords selection. 
\looseness=-1

The second \textbf{challenge} is the \textit{representation under a size constraint}. If we simply select the top distinctive keywords based on the MM-generated distribution, we may end up with redundant keywords that may fall short in representing the target domain as a whole. Hence, it is more intuitive to consider selecting keywords with a domain representation objective. Therefore, we cast this as an optimization problem of selecting $k$ keywords that \textit{coarsen} the candidate distribution adapting the concept of \textit{statistical machine translation} \cite{brown1993mathematics} with the objective of minimizing the divergence between the initial and coarsened distributions of candidate keywords.

In summary, as our \textbf{contributions} in this paper, 
\textbf{firstly}, we propose a new problem formulation named \textit{domain representative keywords selection}. \textbf{Secondly}, we propose a framework for solving the problem consisting of two steps: (1) generating candidate keywords distribution using a two-component mixture model mechanism and (2) selecting a subset of keywords utilizing the generated distribution with an introduced optimization algorithm. \textbf{Thirdly}, we prove that our proposed optimization problem can be efficiently solved with a near-optimal approximation ratio. \textbf{Finally}, to validate the effectiveness of our approaches, we conduct extensive experiments on multiple domains for different tasks demonstrating the superiority of our framework against strongly designed baselines.
\section{Related Work}
\label{sec:related_work}
The problem of \textit{domain representative keywords selection} is related to the automatic keyword extraction (AKE) problem. AKE focuses on extracting or generating the most prominent keywords from single/multiple documents. Existing methods for AKE can be classified into two categories: supervised and unsupervised keyword extraction. Early supervised methods consider AKE as a binary classification problem \citep{DBLP:conf/dl/WittenPFGN99,turney2000learning} by learning a classifier from annotated documents to predict whether a candidate phrase is a keyword or not. 
Recently, \textit{deep learning} has been used for the supervised AKE. E.g., \cite{meng2017deep} uses an encoder-decoder-based framework to generate keywords where \cite{alzaidy2019bi} addresses AKE as a sequence labeling problem. Unsupervised AKE methods mostly apply graph-based ranking mechanisms utilizing semantic relatedness measure between keywords \cite{mihalcea2004textrank}. Besides, linguistic \cite{handler2016bag} and semantic \cite{bennani2018simple} approaches have also been used for unsupervised AKE. 

However, the main focus of the above studies is to describe single/multiple documents rather than domain-specific keywords extraction. To solve this problem, several researches \cite{liu2015mining,shang2018automated,lu2019concept,wang2020mining,huang2021measuring} have been conducted on domain-specific fine-grained keyword extraction. E.g., \cite{huang2021measuring} propose an algorithm for measuring the relevance of a keyword in a particular domain. However, this approach requires a user to provide some seed domain-relevant terms for supervising the algorithm. Moreover, the above approaches only consider ranking keywords based on their domain specificity (or relevance). None of them deals with the problem of domain representative keyword selection with a specific size constraint.

The mixture model used for generating keywords distribution in our approach is related to the research on probabilistic topic models \cite{hofmann2001unsupervised,blei2003latent} and comparative text mining \cite{sarawagi2003cross,zhai2004cross}. However, the difference between our approach and these studies is that rather than finding multiple latent topics or themes from a collection or multiple collections of documents, we model a target domain corpus as a distribution of unigram language model contrastive with a context model.
\section{Proposed Methodology}
\label{sec:proposed_method}
Our proposed framework consists of two steps: (1) generating distribution for the candidate keywords and (2) selecting a subset that best represents the target domain utilizing the generated distribution.
\subsection{Keywords Distribution Generation}
\label{sec:cand_dist}
To select keywords, how do we represent a target domain in contrast with a context one? 
One naive solution can be the frequency distribution of keywords in the target corpus. However, this distribution is biased towards common but possibly non-distinctive keywords (e.g., \textit{data}, \textit{method} and \textit{model} in CS), which may not differentiate the target (e.g., CS) from the context (e.g., Physics) domain. On the other hand, among candidate keywords, the distinctive keywords may have similar corpus statistics (i.e., frequency from target domain corpus) with many non-distinctive common keywords. Therefore, it is not easy to separate those desired target domain keywords from non-distinctive common keywords using simple statistics calculated from the target domain corpus. E.g., keyword \textit{algorithm} is more distinctive than \textit{method}, but both are popular keywords in CS domain. 
Therefore, simply filtering out highly frequent keywords may lose many distinctive keywords for a target domain.

To handle the above problem, we regard the target corpus as a mixture of two unigram language models. Specifically, the corpus is assumed to be generated from a mixture of two multinomial component models. One model is the known background model $\theta_B$ (computed from the context corpus), which models the non-distinctive common keywords in the target and context corpora. The other one is the target domain model ($\theta_D$) that needs to be estimated and concerned for prioritizing distinctive keywords in that domain. 

Formally, let $\mathcal{C}$ be the target domain corpus from which we are interested to find the keyword distribution, then the log-likelihood value (LLV) of generating $\mathcal{C}$ from this mixture model is
\resizebox{1.0\linewidth}{!}{
  \begin{minipage}{\linewidth}
    \begin{align}
    &\log p(\mathcal{C}|\theta_D) = \nonumber\\ & \sum_{t_i \in V} c(t_i,\mathcal{C}) \log [ (1 - \lambda) p(t_i|\theta_D) + \lambda p(t_i|\theta_B)]
    \label{eq:mixture_llh},
    \end{align}
  \end{minipage}
}
where $V$ is the candidate keywords set and $c(t_i, \mathcal{C})$ is the frequency of keyword $t_i$ in $\mathcal{C}$. $\lambda$ refers to the mixing weight of the $\theta_B$. In other words, $\lambda$ controls the amount of “background noise” in the corpus we want to be modeled by $\theta_B$. We assume $\theta_B$ and $\lambda$ to be known, and $\theta_D$ be estimated. Specifically, $\theta_B$ is the probability distribution calculated from the context domain corpus. 

In principle, we can estimate $\theta_D$ using any optimization methods. E.g., the Expectation-Maximization (EM) algorithm \cite{dempster1977maximum} is one of them and can be used to compute a maximum likelihood estimate with the following updating formulas:
\resizebox{1.0\linewidth}{!}{
  \begin{minipage}{\linewidth}
  \begin{align*}
    p^{(n)}(z=0|t_i) = \frac{(1-\lambda)p^{(n)}(t_i|\theta_D)}{(1-\lambda)p^{(n)}(t_i|\theta_D) + \lambda p^{(n)}(t_i|\theta_B)},\\
    p^{(n+1)}(t_i|\theta_D) = \frac{c(t_i, \mathcal{C})p^{(n)}(z=0|t_i)}{\sum_{t_j \in V} c(t_j, \mathcal{C})p^{(n)}(z=0|t_j)},
\end{align*}
\vspace{1mm}
  \end{minipage}
}
where $p(z=0|t_i)$ refers how likely $t_i$ is from $\theta_D$. The estimated $\{p(t_1|\theta_D)\cdots p(t_N|\theta_D)\}$ is used as candidate keywords distribution.

\subsection{Keyword Subset Selection}
\label{sec:subselection}
After acquiring a distribution of candidate keywords, we find a subset with a size $k$ to represent the target domain. One possible solution is to select top $k$ keywords based on the candidate distribution ($\theta_D$) generated by the \textit{mixture model} (MM). Hence, the keywords with high distinctiveness to the target domain contrasting with the context domain will be selected. However, one problem with this approach is that the selected keywords may fall short in representing the target domain by only selecting some redundant distinctive keywords. 

To solve the above problem, we view the subset selection as a \textit{distribution coarsening} problem. Specifically, we want to use a subset to estimate the candidate distribution (i.e., coarsened distribution). As defined in the previous section, a domain is a distribution of keywords (i.e., candidate distribution). Therefore, for a subset of keywords to represent the domain, the coarsened distribution by the subset should closely approximate the candidate distribution of that domain.

Formally, let $\mathcal{P} = \{p(t_i)\cdots p(t_N)\}$ be the candidate distribution, we compute a coarsened distribution $\tilde{\mathcal{P}} = \{\tilde{p}(t_i)\cdots \tilde{p}(t_N)\}$ by subset $S$ and $\tilde{p}(t_i)$ for each $t_i \in V$ is calculated as:
\resizebox{1.0\linewidth}{!}{
  \begin{minipage}{\linewidth}
    \begin{align}
    \tilde{p}(t_i) = \sum_{t_j \in S} p(t_i|t_j) p(t_j)
    \label{eq:estpi},
    \end{align}
  \end{minipage}
}
where $ p(t_i|t_j)$ refers to the probability of \textit{semantically translating} $t_j$ into $t_i$. This idea of estimating the probability of each keyword from candidates by a subset is adapted from the statistical machine translation from the same language used in information retrieval \cite{DBLP:conf/sigir/BergerL99}. 

Now to find the subset, we introduce an optimization problem with objective of selecting a subset ($S$) with size $k$ from candidates ($V$) that minimizes the difference between the LLV of generating $\mathcal{C}$ by $\mathcal{P}$ and $\tilde{\mathcal{P}}$, respectively. We know that the LLV of generating $\mathcal{C}$ by $\mathcal{P}$ is $\log p(\mathcal{C}) = \sum_{i \in V} c(t_i,\mathcal{C}) \log p(t_i)$ where $c(t_i, \mathcal{C})$ is the frequency of $t_i$ in $\mathcal{C}$. Similarly, the LLV of generating $\mathcal{C}$ from $\tilde{\mathcal{P}}$ is $\log \tilde{p}(\mathcal{C})$. Hence, given $|S|=k$, our optimization objective is:
\resizebox{1.0\linewidth}{!}{
  \begin{minipage}{\linewidth}
    \begin{align}
    S &= \arg \min_{S \subseteq V} \|\log p(\mathcal{C}) - \log \tilde{p}(\mathcal{C})\| \nonumber\\
    &= \arg \min_{S \subseteq V} \|\sum_{t_i \in V} c(t_i,\mathcal{C}) \log \frac{p(t_i)}{\tilde{p}(t_i)}\| \nonumber \\
    &= \arg \min_{S \subseteq V} \|\sum_{t_i \in V} p(t_i) \log \frac{p(t_i)}{\tilde{p}(t_i)}\| \nonumber \\
    &= \arg \min_{S \subseteq V} D_{KL}(\mathcal{P} \| \tilde{\mathcal{P}})
    = \arg \min_{S \subseteq V} \phi(S)
    \label{eq:objective},
    \end{align}
    \vspace{0.5mm}
  \end{minipage}
}
where $\phi(S)$ is our objective function, and $D_{KL}(\mathcal{P}\|\tilde{\mathcal{P}})$ is Kullback–Leibler (KL) divergence \cite{kullback1951information} between $\mathcal{P}$ and $\tilde{\mathcal{P}}$.

From \eqref{eq:estpi}, one obvious question is how to calculate $p(t_i|t_j)$. For this, we use \textit{mutual information} (MI) to estimate $p(t_i|t_j)$ inspired from \cite{karimzadehgan2010estimation} where MI is used to estimate a similar model for information retrieval. MI is a good measure to judge relatedness between two terms. In our model, for any two terms $t_i$ and $t_j$, we first compute MI ($I(t_i;t_j)$) between them and normalize it into a probability as below:
\resizebox{1.0\linewidth}{!}{
\begin{minipage}{\linewidth}
\begin{align}
    I(t_i;t_j) = \sum_{b_{i}, b_{j}} p(b_{i}, b_{j}) \log \frac{p(b_{i}, b_{j})}{p(b_{i}) p(b_{j})}, 
    \label{eq:mi}
    \\
    p(t_i|t_j) \approx p_{MI}(t_i|t_j) = \frac{I(t_i;t_j)}{\sum_{t_j^{\prime} \in V} I(t_i;t_j^{\prime})}
    \label{eq:pmi},
\end{align}
\vspace{0.5mm}
\end{minipage}
}
where $b_{i}$ is a binary variable indicating the presence/absence of $t_i$. E.g., $p(b_{i}=1)$ indicates the ratio of documents containing $t_i$ and $p(b_{i}=1, b_{j}=1)$ indicates the ratio of documents where both $t_i$ and $t_j$ co-occur. Here, $p_{MI}(t_i|t_j)$ gives us the probability of how $t_j$ relates to $t_i$; intuitively, this probability would be higher when these two terms frequently co-occur in the same document in the target corpus.

{\flushleft \textbf{Optimization.}}
We are interested in finding a subset $S$ with size $k$ from $V$ such that $\phi(S)$ is minimized, i.e., $\arg \min_{S\in V} \phi(S) \text{ s.t. } |S| = k$. This is referred as the \textit{cardinality-constrained optimization} and proven to be NP-hard \cite{feige1998threshold}. However, if the objective function $\phi(S)$ is monotone and submodular, a simple greedy algorithm is guaranteed to obtain an approximation of $1 - \frac{1}{e}$. We call a non-negative real valued function $F$ (to be maximized) \textit{submodular} if it has the property of \textit{diminishing returns} that is $F(X\cup \{v\}) - F(\{v\}) \geq F(Y\cup \{v\}) - F(\{v\})$ for all $v\in V$ and $X\subseteq Y \subseteq V$. Moreover, $F$ is said to be \textit{monotone} if $F(X) \leq F(Y)$ for all $X\subseteq Y$.
\begin{theorem}
For minimizing the objective function $\phi(\cdot)$, a simple greedy algorithm obtains an approximation guarantee of $1 - \frac{1}{e}$.
\label{thm:greedy_opt}
\end{theorem}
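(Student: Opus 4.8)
The plan is to convert the minimization of $\phi$ into the maximization of a monotone submodular set function and then invoke the classical greedy guarantee of Nemhauser, Wolsey, and Fisher. First I would expand
\[
\phi(S) = D_{KL}(\mathcal{P}\|\tilde{\mathcal{P}}) = \sum_{t_i \in V} p(t_i)\log p(t_i) \;-\; \sum_{t_i \in V} p(t_i)\log \tilde{p}(t_i),
\]
and note that the first sum is the (negative) entropy of $\mathcal{P}$ and does not depend on $S$. Hence minimizing $\phi(S)$ is equivalent to maximizing
\[
L(S) \;=\; \sum_{t_i \in V} p(t_i)\,\log \tilde{p}(t_i) \;=\; \sum_{t_i \in V} p(t_i)\,\log\!\Big(\sum_{t_j \in S} p(t_i|t_j)\,p(t_j)\Big),
\]
so it suffices to show that $L$ is monotone and submodular and then translate the $1-\frac1e$ bound back.

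For monotonicity: for each fixed $t_i$, the inner quantity $m_i(S) := \sum_{t_j \in S} p(t_i|t_j)p(t_j)$ is a modular set function with non-negative weights $p(t_i|t_j)p(t_j)\ge 0$, hence non-decreasing in $S$; composing with the increasing map $\log$ and taking the non-negative combination $\sum_{t_i} p(t_i)(\cdot)$ preserves monotonicity, so $L(X)\le L(Y)$ whenever $X\subseteq Y$. For submodularity I would use the standard fact that an increasing concave function composed with a non-negative modular function is submodular: for a candidate element $t_e$ put $\delta_i(x)=\log\!\big(x+p(t_i|t_e)p(t_e)\big)-\log x$; concavity of $\log$ makes $\delta_i$ non-increasing in $x$, and since $m_i(X)\le m_i(Y)$ for $X\subseteq Y$ we get $\delta_i(m_i(X))\ge \delta_i(m_i(Y))$ — i.e. the marginal gain of $t_e$ to $L$ only shrinks as the chosen set grows, which is exactly diminishing returns. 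A non-negative linear combination of submodular functions (with weights $p(t_i)$) is submodular, so $L$ is submodular; equivalently $\phi$ is supermodular and non-increasing, which is the right structure for cardinality-constrained minimization.

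With $L$ monotone and submodular, the greedy rule that repeatedly adds the element of largest marginal gain to $L$ (equivalently, largest decrease in $\phi$) returns a set $S_g$ with $|S_g|=k$ achieving the $1-\frac1e$ ratio against the best such set, by Nemhauser et al. The one point that genuinely needs care — and the step I expect to be the only real obstacle — is the boundary normalization that theorem assumes, namely a monotone submodular $F$ with $F(\emptyset)=0$ and $F\ge 0$, whereas here $L(\emptyset)=-\infty$ (the empty set ``covers'' no keyword) and $L\le 0$. I would handle this with the usual device: apply a vanishing additive/Dirichlet smoothing so that $\tilde{p}(t_i)\ge\varepsilon$ even at $S=\emptyset$, work with the normalized non-negative objective $F(S)=L(S)-L(\emptyset)$ (for which $F(\emptyset)=0$, $F$ is still monotone submodular, and greedy selects the same set), obtain the $1-\frac1e$ guarantee for $F$, and then undo the normalization and let $\varepsilon\to 0$ to conclude the guarantee for the reduction in $\phi$ relative to the baseline. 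An alternative that avoids smoothing is to fix the first greedy pick $t_1=\arg\max_t L(\{t\})$ and apply the theorem on ground set $V\setminus\{t_1\}$ to $F(S)=L(S\cup\{t_1\})-L(\{t_1\})$, which is non-negative, monotone, submodular, and zero at $\emptyset$. The monotonicity and submodularity verifications themselves are routine; it is this finiteness/normalization bookkeeping that carries all the subtlety.
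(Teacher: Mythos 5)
Your proposal follows essentially the same route as the paper's proof: replace minimization of $\phi$ by maximization of its negation (equivalently, drop the $S$-independent entropy term), show the resulting objective is monotone and submodular because each term is $\log$ applied to a non-negative modular function $\sum_{t_j\in S}p(t_i|t_j)p(t_j)$, so the marginal gain $\log\bigl(1+\tfrac{p(t_i|v)p(v)}{\sum_{t_j\in S}p(t_i|t_j)p(t_j)}\bigr)$ shrinks as $S$ grows, and then invoke the classical greedy $1-\frac{1}{e}$ guarantee. The only difference is that you explicitly handle the normalization issue ($L(\emptyset)=-\infty$ and $L\le 0$, whereas the classical theorem assumes a non-negative objective vanishing at $\emptyset$), a technicality the paper's proof silently glosses over, so your version is if anything more careful.
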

\begin{proof}

The proof can be found in Appendix \ref{sec:appendix_proof}.
\end{proof}

So, as per Theorem \ref{thm:greedy_opt}, we can obtain a near optimal solution using a simple greedy algorithm. Initially, we have $S = \emptyset$, then iteratively update $S = S \cup  \arg \max_{t\in V\setminus S} \mathcal{G}(t|S)$ until $|S| = k$ where $\mathcal{G}(t|S) = \phi(S) - \phi(S \cup {t})$ is the gain of adding a new term $t$ to $S$. Thanks to the submodularity property of $\phi(\cdot)$, this simple greedy algorithm can further be accelerated by lazy greedy algorithm \cite{minoux1978accelerated}. More specifically, instead of recomputing $\mathcal{G}(t_i|S), \forall t_i\in V$ in every step, we use a priority queue of sorted gains $g(t_i), \forall t_i\in V$. Starting with $g(t_i) = - \phi(\{t_i\}), \forall t_i \in V $, the algorithm adds a term $t_i$ to $S$ if $g(t_i) \geq \mathcal{G}(t_i|S)$, otherwise we update $g(t_i) \text{ to } \mathcal{G}(t_i|S)$ and resort the priority queue. This largely improves the efficiency of the algorithm.   
\section{Experiments}
\label{sec:experiments}
This section evaluates our models from different perspectives: (1) the ability to select representative summary keywords for a target domain; (2) the performance for trending keywords selection task in a domain for different time frames.

\subsection{Experiment Setup}
{\flushleft \textbf{Datasets.}} In our experiments, to test the generality of the proposed approaches, we use two document collections from two domains for constructing target and context corpora for each of the domains. One is abstracts collections from the \textit{arxiv} repository (version 47)\footnote{\href{https://www.kaggle.com/Cornell-University/arxiv}{https://www.kaggle.com/Cornell-University/arxiv}}, and the other is a collection of newsgroup documents\footnote{\href{http://qwone.com/~jason/20Newsgroups}{http://qwone.com/~jason/20Newsgroups}}.

{\flushleft \textbf{Candidate Keywords.}}
In our experiments, we use different sets of candidate keywords. For the CS domain, we collected keywords from two external sources named Springer and Aminer \cite{tang2008arnetminer}. The Springer CS keyword list is collected through web scraping from Springer\footnote{\href{https://www.springer.com/gp}{https://www.springer.com/gp}} and trimmed to 83K based on frequency $\geq$ 5. The Aminer keyword list is the collection of keywords assigned by authors in CS research papers, and there are approximately 50K keywords in this list. Alongside keywords from external sources, we also created candidate sets extracted from concerning corpus using AutoPhrase \cite{shang2018automated} tool. All the candidate keywords are lemmatized, and several filtering rules are used. For instance, keywords containing only letters, numbers, hyphens are used; stop and single-letter words are removed.

{\flushleft \textbf{Baselines.}} We compare our models with the following four baseline keyword selection algorithms.
\begin{itemize}[nolistsep,leftmargin=*]
\item \textbf{Relative Frequency (RF):}
Since a keyword is likely to be domain representative when it frequently appears in a domain corpus, we consider a simple approach that selects the top $k$ frequent keywords based on the relative frequency calculated from the target corpus.
    
\item \textbf{Log-odds (LO):} 
We adapted a method \cite{monroe2008fightin} for keyword selection which was introduced to compare words used by two political parties. Recently, \cite{hughes2020detecting} used this method for detecting trending terms in \textit{Cybersecurity} forum discussion. In this baseline, we adapt this method to model keywords as a function of a particular domain or time to compute the likelihood of keywords in that domain or time as log-likelihoods (``log-odds'').

\item \textbf{Page Rank (PR):} 
This baseline is a graph-based keyword selection method using PageRank \cite{mihalcea2004textrank}. We build the graph of candidate keywords where each edge weight denotes how closely two keywords are related.
    
\item \textbf{Facility Location (FL) Function:} 
Facility location function is a representation based subset selection measure \cite{mirchandani1990discrete} used for finding a representative subset of items. Recently, this measure is used for training-data subset selection \cite{kaushal2019learning}. In this paper, we adapt this measure as a baseline for selecting subset from candidate keywords set. Specifically, denoting $rel(t_i, t_j)$ as the relatedness of two keywords $t_i$ and $t_j$, the objective is to select a subset $S \in V$ that maximizes FL function $f(S) = \sum_{t_i\in V} \max_{t_j\in S} rel(t_i, t_j)$. 
\end{itemize}

{\flushleft \textbf{Proposed Models.}}
We have the following three variants of our proposed framework.
\begin{itemize}[nolistsep,leftmargin=*]
\item \textbf{KL divergence + RF ($\mathbf{KL_{rf}}$):}
This model is a simple version of our proposed objective function $D_{KL}(\mathcal{P}\|\tilde{\mathcal{P}})$ defined in \eqref{eq:objective}. In this model, $\mathcal{P}$ is the relative frequency distribution calculated from the target corpus and $\tilde{\mathcal{P}}$ is coarsened distribution defined in \eqref{eq:estpi}.  

\item \textbf{Mixture Model (MM):}
In this proposed model, keywords are ranked based on the estimated distribution for the target domain contrasting with a context domain using the mixture model defined in Section \ref{sec:cand_dist}. Based on the distribution, the top $k$ keywords are selected.   

\item \textbf{KL Divergence + MM ($\mathbf{KL_{mm}}$):}
This proposed model is similar to $\mathbf{KL_{rf}}$. In $\mathbf{KL_{mm}}$, instead of using relative frequency, the mixture model estimated keyword distribution is used as $\mathcal{P}$ in $D_{KL}(\mathcal{P}\|\tilde{\mathcal{P}})$.
\end{itemize}

{\flushleft \textbf{Implementation Details.}}
There are some parameters both in baselines and the proposed models we have to set. E.g., the mixing weight $\lambda$ for the background model in the mixture model is set to two different values based on the specificity of the target domains. Particularly, when we set $\lambda$ to a small value, the model favors frequent non-informative terms (i.e., domain-specific stop words). Therefore, the larger values are set for $\lambda$. In our experiments, for a broad domain like CS, we set $\lambda$ to 0.9, and for more specific domains (i.e., AI and subtopics in newsgroup), we set $\lambda$ to 0.99. The reason for these two different values of $\lambda$ is that more specific domains demand larger $\lambda$ for selecting distinctive keywords. For optimizing MM, we use Expectation-Maximization (EM) algorithm \cite{dempster1977maximum}. Since EM does not guarantee the global maxima, in our experiment, we run the algorithm multiple times with random initialization, and the one with the best MLE is chosen to reduce the chance of getting local maxima. As we use mutual information (MI) based on document co-occurrence statistics in our model (defined in \eqref{eq:pmi}), for the fair comparison, in the baseline FL, we also use MI between two keywords $t_i$ and $t_j$ to encode the relatedness between them (i.e., $rel(t_i, t_j)$). Similarly, MI is used for computing edge weight in the PR method.

\subsection{Experiment Results}
\subsubsection{Summary Keywords Selection}
We conduct both quantitative and qualitative studies to evaluate the ability of proposed models to select domain representative summary keywords. For this purpose, we use the abstracts from the arxiv under CS categories as the target corpus. The context corpus is composed of all abstracts in the arxiv repository.
\begin{table}[!t]
\centering
\resizebox{1.0\linewidth}{!}{%
\begin{tabular}{@{}p{1em}PPPPPPP@{}}
\toprule
$k$   & \multicolumn{1}{c}{RF}      & \multicolumn{1}{c}{LO}      & \multicolumn{1}{c}{PR}      & \multicolumn{1}{c}{FL}      &  \multicolumn{1}{c}{$\mathbf{KL_{rf}}$} & \multicolumn{1}{c}{MM}               & \multicolumn{1}{c}{$\mathbf{KL_{mm}}$} \\ \midrule
10  & 1.0651  & 1.1001  & 1.1035  & 1.0722  & 1.0651                                                 & 2.0981           & \textbf{2.1212}                                            \\
20  & 1.1440  & 3.2476  & 2.2682  & 1.1345  & 1.1451                                                 & 4.2626           & \textbf{4.2972}                                            \\
30  & 2.2134  & 4.3965  & 3.3607  & 3.2682  & 3.2875                                                 & \textbf{4.4321}  & 4.4169                                                     \\
40  & 3.3273  & 4.4929  & 4.4902  & 3.3515  & 3.3660                                                 & 4.6000           & \textbf{4.6018}                                            \\
50  & 3.4530  & 4.6896  & 4.5734  & 3.4505  & 3.4496                                                 & \textbf{5.6826}  & \textbf{5.6826}                                            \\
100 & 4.7812  & 8.2382  & 7.0399  & 4.7626  & 4.8761                                                 & 8.2708           & \textbf{8.2824}                                            \\
200 & 9.7166  & 11.1047 & 8.9403  & 9.5908  & 8.7045                                                 & 11.1082          & \textbf{12.0233}                                           \\
500 & 18.902  & 19.1719 & 18.0464 & 16.6171 & 17.7441                                                & \textbf{19.3221} & 19.2353                                            \\ \bottomrule
\end{tabular}}
\caption{Category correspondence results}
\label{tab:domain_summ_nmi}
\end{table}
{\flushleft \textbf{Quantitative Evaluation.}}
We create keyword summaries for the CS domain with varying sizes ($k$) for quantitative evaluation. We collected $52$ known category keywords from arxiv categories as CS representative ground keywords to evaluate the ability of selected $k$ summary keywords to represent the target domain when $k$ varies. The correspondence between $k$ selected keywords $S=\{t_1\cdots t_k\}$ and $m$ category keywords $C=\{c_1\cdots c_m\}$, $CC(S,C)$ is calculated as the summation of the pairwise normalized mutual information (NMI) \cite{bouma2009normalized} between $S$ and $C$ i.e., $CC(S,C) = \sum_{i,j}\frac{I(t_i; c_j)}{H(t_i;c_j)}$ where $I(t_i; c_j)$ is calculated following formula from \eqref{eq:mi} and $H(t_i;c_j) = -\sum_{b_{i}, b_{j}} p(b_{i}, b_{j}) \log p(b_{i}, b_{j})$ is the joint entropy of $t_i$ and $c_j$.

From the results on Table \ref{tab:domain_summ_nmi}, using AutoPhrase extracted candidate keywords, we can see that even though no supervision is used, our methods $\mathbf{KL_{mm}}$ and MM select keywords that best correspond with the known categories outperforming all the baselines (similar results from two more candidate sets are shown in Appendix \ref{sec:additional_quancs}). We observe that there is a good improvement of result from MM to $\mathbf{KL_{mm}}$. However, this is not true for $\mathbf{KL_{rf}}$ and the RF baseline. The reason is that the relative frequencies from the target corpus favor the non-distinctive common keywords (e.g., \textit{model} and \textit{method}). As described in Section \ref{sec:subselection}, $\mathbf{KL_{rf}}$ tries to select the subset of keywords that best estimate the original candidate distribution. Hence, it also favors those common keywords to attain the nearest estimation of the original distribution. 

On the other hand, the MM-generated distribution assigns larger probabilities to distinctive keywords of the target domain, contrasting with the context domain. Therefore, selecting a keyword subset by $\mathbf{KL_{mm}}$ with close estimation of the MM generated distribution also favors distinctive keywords with the domain representative objective. Furthermore, one interesting observation is that when $k$ is smaller, the selected keywords by $\mathbf{KL_{mm}}$ tend to summarize the domain better than that of MM. The primary reason for this is that $\mathbf{KL_{mm}}$ prefers to select more non-redundant keywords than MM while $k$ is smaller, which we later discuss from Table \ref{tab:cs_summary_ap}.
\begin{table}[!t]
\scriptsize
\resizebox{1.0\linewidth}{!}{%
\begin{tabular}{p{0.06\linewidth}p{0.74\linewidth}}
\toprule
Models & \multicolumn{1}{c}{Selected 20 keywords in CS} \\ \midrule
RF                                            & paper, model, \textbf{algorithm}, datum, result, information, \textbf{graph}, state, high, art, single,   order, human, research, general, design, \textbf{deep learning}, \textbf{semantic},   knowledge, \textbf{neural network}                                                                               \\ \midrule
LO                                            & \textbf{algorithm}, art,   information, \textbf{semantic}, \textbf{graph}, human, \textbf{deep learning}, paper, datum, \textbf{neural network}, \textbf{machine learning}, real world,   research, video, \textbf{\textbf{robot}}, communication, language, \textbf{security}, \textbf{architecture}, knowledge                                            \\ \midrule
PR                                            & polynomial,   research, channel, paper, energy, \textbf{graph}, datum, model, experimental, information, \textbf{machine learning}, \textbf{software}, binary, english,   propose method, function, acoustic, upper, solution, algebraic                                                       \\ \midrule
FL                                            & art, paper, datum, \textbf{algorithm}, model, result, high, information, \textbf{graph}, channel, research, order, single, human, general, \textbf{deep learning}, design, experimental, solution,   knowledge                                                                                  \\ \midrule
{$\mathbf{KL_{rf}}$} & model, \textbf{algorithm},   paper, datum, state, \textbf{graph}, result, information, high, art, human, research, design, single, \textbf{semantic}, order, \textbf{deep learning}, energy, general, \textbf{neural network}                                                                                    \\ \midrule
MM                                            & \textbf{algorithm}, art,   \textbf{semantic}, \textbf{deep learning}, human, \textbf{neural network}, \textbf{convolutional neural network}, \textbf{machine learning}, real world, video,   information, \textbf{robot},   research, language, communication, \textbf{security},   \textbf{architecture}, \textbf{privacy}, \textbf{deep neural network}, \textbf{label} \\ \midrule
{$\mathbf{KL_{mm}}$} & \textbf{algorithm}, art,   \textbf{semantic}, \textbf{deep learning}, human, \textbf{security}, \textbf{neural network}, real world, \textbf{convolutional neural network}, communication, \textbf{machine learning}, \textbf{robot},   language, video, research, \textbf{privacy}, \textbf{label}, information, \textbf{software}, \textbf{architecture}  \\
\bottomrule
\end{tabular}}
{\raggedright \centering Keywords distinctive to the CS domain are \textbf{highlighted} (annotated by authors). \par}
\caption{Summary keywords in CS Domain}
\label{tab:cs_summary_ap}
\end{table}
\begin{table}[!t]
\centering
\resizebox{0.8\linewidth}{!}{%
\begin{tabular}{@{}lccc@{}}
\toprule
 Models                                                 & 2000-2009       & 2010-2019       & 2020-2021       \\ \midrule
RF                                                         & 0.6289          & 0.6640           & 0.6493          \\
LO                                                         & 0.6813          & 0.7199          & 0.7238          \\
PR                                                         & 0.6626          & 0.6970          & 0.6826          \\
FL                                                         & 0.6172          & 0.6848          & 0.6528          \\ \midrule
{$\mathbf{KL_{rf}}$}     & 0.6282          & 0.6792          & 0.6516          \\
MM                                                         & \textbf{0.6908} & 0.7331          & 0.7898          \\
{$\mathbf{KL_{mm}}$} & 0.6898          & \textbf{0.7763} & \textbf{0.7944} \\ \bottomrule
\end{tabular}}
\caption{Results using trending ground truth keywords}
\label{tab:time_summ_sim}
\end{table}
\begin{table}[!t]
\centering
\resizebox{0.8\linewidth}{!}{%
\begin{tabular}{@{}lccc@{}}
\toprule
 Models                                             & 2000-2009 & 2010-2019 & 2020-2021 \\ \midrule
RF                                            & 0.3593    & 0.3195    & 0.323     \\
LO                                            & 0.3956    & 0.3326    & 0.4043    \\
PR                                            & 0.3705    & 0.3211    & 0.3641    \\
FL                     & 0.3591    & 0.3217    & 0.3336    \\\midrule
{$\mathbf{KL_{rf}}$} & 0.3583    & 0.3189    & 0.3239    \\ 
MM                     & 0.4104    & 0.3468    & 0.5145    \\
{$\mathbf{KL_{mm}}$} & \textbf{0.4165}    &\textbf{ 0.3523}    & \textbf{0.5215}    \\ \bottomrule
\end{tabular}}
\caption{Results generated using Google Trends}
\label{tab:time_summ_gt}
\end{table}

{\flushleft \textbf{Qualitative Evaluation.}}
For the qualitative evaluation, we show the summary keywords selected by different algorithms in the CS domain from AutoPhrase extracted candidate keywords in Table \ref{tab:cs_summary_ap} (simmilar additional results are shown in Appendix \ref{sec:additional_cs}). This study aims to observe the difference between the proposed models and baselines in selecting summary keywords. We can see that our models (MM and $\mathbf{KL_{mm}}$) outperform all the baselines by selecting the most number of CS representative keywords. We also observe that the LO baseline method also selects a comparable amount of distinctive keywords. The reason is its use of a contrastive method like MM for selecting keywords for a particular corpus compared to a context corpus. 

However, our models MM and $\mathbf{KL_{mm}}$ tend to select more representative keywords than the LO method. 
E.g., we can see that our methods select keywords like \textit{privacy}, \textit{software} and \textit{convolutional neural network} instead of keywords that LO selects like \textit{graph} and \textit{paper}, \textit{data}. Another observation is that the keywords selected by PR are mostly those keywords (i.e., \textit{experimental}, \textit{data} and \textit{function}) that have a broad association with other words. However, these keywords as an unit do not convey much information about the domain. 


\begin{table*}[!hbt]
\scriptsize
\centering
\resizebox{1.0\linewidth}{!}{%
\begin{tabular}{p{0.04\textwidth}p{0.35\textwidth}p{0.35\textwidth}p{0.35\textwidth}}
\toprule
                                                          & \multicolumn{1}{c}{2000-2009}                                                                                                                                                                                                                                                                                                             & \multicolumn{1}{c}{2010-2019}                                                                                                                                                                                                                                                                                                                             & \multicolumn{1}{c}{2020-2021}                                                                                                                                                                                                                                                                                    \\ \midrule
                                                          
RF                                                         & paper, problem, \textbf{algorithm}, model, method, approach, \textbf{system},   information, result, datum, set, application, number, user, word,   performance, language, order, time, case                                                                                                                                                                & model, method, paper, approach, image, problem, datum, task,   algorithm, dataset, performance, network, result, feature, system, training,   application, work, number, object                                                                                                                                                       & model, method, task, datum, approach, dataset, image, paper,   performance, problem, training, algorithm, network, feature, result, system,   work, \textbf{application}, \textbf{deep learning}, experiment                                                                                                                                                                                    \\ \midrule

LO                                                         & \textbf{logic program}, rule, manipulator, \textbf{genetic algorithm},   workspace, \textbf{parallel manipulator}, \textbf{logic programming}, document, grammar, stable   model, \textbf{artificial immune system}, logic, word, web site, \textbf{answer set}, global   constraint, \textbf{machining}, fitness, belief, evolvability                                                                  & image, dataset, method, feature, task, \textbf{convolutional neural network}, object, training, \textbf{deep learning}, classification, classifier, \textbf{deep   neural network}, \textbf{neural network}, robot, model, video, \textbf{recurrent neural   network}, word, \textbf{segmentation}, representation                                                                        & model, dataset, task, training, \textbf{transformer}, image, \textbf{deep   learning}, \textbf{neural network}, prediction, label, \textbf{federated learning}, learning,   method, \textbf{machine learning}, \textbf{language model}, explanation, experiment, \textbf{covid 19},   \textbf{reinforcement learning}, feature                                                           \\\midrule
PR                                                         & problem, \textbf{algorithm}, paper, user, datum, model, word, method,   image, information, approach, \textbf{system}, constraint, set, solution, performance,   application, document, result, rule                                                                                                                                                        & image, algorithm, user, robot, object, network, word, model,   dataset, agent, environment, datum, task, video, method, training, \textbf{language},   system, policy, \textbf{segmentation}                                                                                                                                                            & image, robot, model, object, algorithm, dataset, task, agent,   environment, user, datum, policy, language, graph, \textbf{reinforcement learning},   network, method, video, training, \textbf{deep learning}                                                                                                                    \\ \midrule

FL                                                         & paper, problem, \textbf{algorithm}, model, method, approach, \textbf{system},   result, information, set, application, datum, number, user, word, order,   performance, case, image, time                                                                                                                                                                   & image, model, method, paper, problem, approach, datum,   algorithm, task, network, performance, dataset, result, user, application,   work, feature, system, training, number                                                                                                                                                         & image, model, method, paper, task, datum, approach, dataset,   performance, problem, training, algorithm, work, result, network, experiment,   \textbf{application}, system, feature, \textbf{deep learning}                                                                                                                       \\ \midrule

{$\mathbf{KL_{mm}}$}     & problem, paper, \textbf{algorithm}, method, model, \textbf{system}, approach,   information, datum, word, set, result, user, application, \textbf{agent}, number,   network, performance, \textbf{language}, order                                                                                                                                                            & image, model, method, algorithm, datum, paper, task, network,   problem, approach, dataset, system, user, feature, performance, training,   object, application, result, information                                                                                                                                                  & model, method, image, task, datum, dataset, problem, network,   approach, paper, training, algorithm, system, performance, feature, object,   \textbf{application}, user, \textbf{deep learning}, result                                                                                                     \\ \midrule

MM                                                         & \textbf{logic program}, manipulator, \textbf{genetic algorithm}, workspace,   \textbf{parallel manipulator}, \textbf{logic programming}, grammar, \textbf{stable model}, \textbf{artificial   immune system}, web site, \textbf{answer set}, \textbf{global constraint}, \textbf{machining}, fitness,   \textbf{evolvability}, \textbf{radial distortion}, \textbf{soft constraint}, \textbf{nonmonotonic reasoning},   \textbf{stable model semantic}, \textbf{belief revision} & image, \textbf{convolutional neural network}, \textbf{recurrent neural network},   classifier, \textbf{deep convolutional neural network}, \textbf{deep network}, \textbf{cnn}, \textbf{computer   vision}, \textbf{lstm}, \textbf{deep neural network}, \textbf{bayesian network}, \textbf{rnn}, \textbf{word embedding},   \textbf{svm}, \textbf{segmentation}, \textbf{convolutional network}, \textbf{descriptor}, \textbf{neural machine   translation}, recognition, sentence   & \textbf{transformer}, training, \textbf{federated learning}, \textbf{language model},   \textbf{covid 19}, \textbf{graph neural network}, dataset, explanation, \textbf{deep learning}, \textbf{pre   training}, \textbf{adversarial attack}, \textbf{fine tuning}, \textbf{meta learning}, \textbf{deep learning}   model, lidar, \textbf{self attention}, point cloud, \textbf{reinforcement learning}, \textbf{bert},   label               \\ \midrule

{$\mathbf{KL_{mm}}$} & \textbf{
logic program}, workspace, \textbf{genetic algorithm}, grammar,   manipulator, \textbf{logic programming}, web site, \textbf{global constraint}, \textbf{artificial   immune system}, \textbf{evolvability}, \textbf{parallel manipulator}, synonym, \textbf{stable model},   \textbf{som}, \textbf{belief revision}, \textbf{unification}, \textbf{soft constraint}, \textbf{language resource}, fitness,   \textbf{wordnet}                                          & image, \textbf{convolutional neural network}, \textbf{recurrent neural network},   classifier, \textbf{deep network}, \textbf{deep convolutional neural network}, \textbf{bayesian   network}, \textbf{word embedding}, \textbf{computer vision}, \textbf{descriptor}, \textbf{svm}, \textbf{crf}, \textbf{lstm}, \textbf{neural   machine translation}, dictionary, \textbf{convolutional network}, \textbf{deep neural network},   \textbf{recognition}, \textbf{cnn}, \textbf{segmentation} & \textbf{transformer}, training, explanation, \textbf{language model}, \textbf{covid 19},   \textbf{federated learning}, \textbf{graph neural network}, dataset, \textbf{pre training}, lidar, \textbf{deep   learning}, \textbf{adversarial attack}, label, \textbf{meta learning}, \textbf{knowledge distillation},   \textbf{fine tuning}, \textbf{deep learning model}, \textbf{latent space}, \textbf{datum augmentation}, \textbf{target   domain} \\ \bottomrule
\end{tabular}}
{\raggedright \centering Keywords representative of its corresponding time frame are \textbf{highlighted} (annotated by authors). \par}
\caption{Keyword summaries (top 20 keywords) of three different time frames in AI domain}
\label{tab:time_summ_full}
\end{table*}

Now to see the difference between our models MM and $\mathbf{KL_{mm}}$, we see the difference between their selected keywords. As stated before, we can see $\mathbf{KL_{mm}}$ prefers non-redundant keywords than MM. E.g, $\mathbf{KL_{mm}}$, instead of selecting \textit{deep neural network} as it already selects keywords like \textit{neural network} and \textit{deep learning}, it selects a different keyword \textit{software} where MM prefers redundant keyword \textit{deep neural network}. Therefore, while the only requirement is to rank keywords based on their distinctiveness for a target domain contrastive with a context domain, MM is more practical to use. On the other hand, if the objective is also selecting diverse representative keywords, $\mathbf{KL_{mm}}$ is preferable. See Appendix \ref{sec:newsgroup_result} for more qualitative study using newsgroup dataset. 

\subsubsection{Trending Keywords Selection}
As an important application of our problem, we evaluate the performance of proposed approaches for \textit{trending keywords selection} in the AI domain. This study conducts quantitative and qualitative evaluations considering three different time frames: 2000-2009, 2010-2019, and 2020-2021. For this purpose, we compose a corpus representative of each of the specified time frames by collecting abstracts from the Arxiv repository under AI-related categories: cs.AI, cs.CL, cs.CV, cs.IR, cs.LG, cs.NE and cs.RO. The entire dataset under all CS categories is used for the context corpus.

{\flushleft \textbf{Quantitative Evaluation.}}
Since there is no ground truth trending keywords available for the AI domain, it is not easy to quantitatively evaluate the selected ones for a specific time. Instead, we have created three ground truth sets by collecting related keywords from topic areas used in the call for papers (CFP) of an AI conference called AAAI\footnote{\href{https://www.aaai.org/}{https://www.aaai.org/}} over the three specified time frames. However, the topics that appear in the CFP are not necessarily trending topics, and many topics appear throughout all the time frames. For this, we collect only the changing topics from a time frame to another. Further, to expand the ground truth sets, we also add keywords related to the collected topics. E.g., \textit{word embedding} was a popular keyword in NLP during the 2010s, and one related of this is \textit{word2vec}.

{\flushleft \uline{Evaluation using Ground Truths}.}
For evaluation, we compute the selected keywords' ability to cover the ground truth keywords using a \textit{representativeness} measure. Formally, similar to \cite{kaushal2019learning}, say $s_{ij}$ denotes the similarity between two keywords $t_i$ and $t_j$, $R(S)=\frac{1}{|\mathcal{G}|} \sum_{t_i \in \mathcal{G}} \max_{t_j \in }s_{ij}$ is used as the \textit{representativeness} score of selected keyword set $S$ to represent the ground truth set $\mathcal{G}$. For $s_{ij}$, we compute the cosine similarity between vector representation of $t_i$ and $t_j$. The vector for each keyword is the concatenation of two word-vectors; one is word2vec (300d) \cite{mikolov2013efficient} learned from corresponding corpus, and the other is the compositional GloVe embedding \cite{pennington2014glove} (element-wise addition of the pre-trained 300d word embeddings). The reason for using pre-trained word vectors is that many keywords in ground truth sets do not appear in the corresponding corpus, and thus vectors cannot be learned from that corpus. Table \ref{tab:time_summ_sim} shows the detailed results over three time frames. We can see that our proposed model $\mathbf{KL_{mm}}$ outperforms the other methods with large margins followed by MM. 

{\flushleft \uline{Evaluation using Google Trends}.}
Alongside using ground truths, we also design a quantitative evaluation measure (shown in Table \ref{tab:time_summ_gt}) using Google Trends (GT) API\footnote{\href{https://github.com/GeneralMills/pytrends}{https://github.com/GeneralMills/pytrends}}.  GT\footnote{\href{https://trends.google.com}{https://trends.google.com}} awards a score for a term called \textit{interest over time} that expresses the term's popularity over a specified time range.  Since GT does not have data before 2004, we have to use data from 2004 till 2009 for the 2000-2009 time frame. As our three specified time frames are not equal, we first take the average of provided interest scores for each keyword in each time frame to make the score comparable across different time frames. 
Then, we calculate the probability of each term's interest over three specified time frames. Finally, the average of computed probability scores of 50 selected terms is calculated for each method. This score represents the average probability of selected terms to be trending in each time frame. From Table \ref{tab:time_summ_gt}, we can see that our method $\mathbf{KL_{mm}}$ achieves the best score over others, followed by comparable results from MM. It indicates that our solutions are more appropriate in finding trending keywords for a specified time frame.

{\flushleft \textbf{Qualitative Evaluation.}}
We qualitatively evaluate the performance of different algorithms by directly comparing their selected keywords in each time frame from Table \ref{tab:time_summ_full}. We can see PR selects keywords that are either CS stop words or the keywords that are not distinctive for a perspective time frame compared to others (similar results by RF, FL, $\mathbf{KL_{rf}}$). Because PR primarily depends on the popularity of a keyword and some keywords always appear frequently in any time frames (e.g., \textit{task}, \textit{dataset}, \textit{model}, \textit{etc} ). Here, the LO again provides comparable results. E.g., similar to our methods MM and $\mathbf{KL_{mm}}$, LO also can select very relevant trending keywords during the 2020s like \textit{covid 19}. However, while selecting trending keywords, the LO also tends to select many domain-specific stop words overlapped over different time frames (e.g., \textit{method, task, model}). As discussed before, the reason is that LO does not have the objective of representing the target domain. Therefore, it is not that effective in identifying trending keywords representative for a target domain compared to our models.

\section{Conclusion}
\label{sec:conclusion}
This paper proposes an approach for solving an important but understudied problem of a domain representative keywords selection from candidates contrasting with a context domain. Our approach utilizes a two-component mixture model mechanism followed by a novel subset selection optimization algorithm to tackle the problem. We believe this work will encourage the automated text structuring problem and help a wide range of downstream applications in NLP. For future research direction, we want to focus on adapting the proposed approach in a more challenging task like single document summarization where the scope of information is limited. Besides, our proposed techniques are general and thus can be used in many applications such as information extraction, topic modeling, and concept indexing. Exploration of those applications is an interesting future direction.
\section*{Acknowledgements}
We thank the anonymous reviewers for their valu- able comments and suggestions. This material is based upon work supported by the National Science Foundation IIS 16-19302 and IIS 16-33755, Zhejiang University ZJU Research 083650, Futurewei Technologies HF2017060011 and 094013, IBM-Illinois Center for Cognitive Computing Systems Research (C3SR)- a research collaboration as part of the IBM Cognitive Horizon Network, grants from eBay and Microsoft Azure, UIUC OVCR CCIL Planning Grant 434S34, UIUC CSBS Small Grant 434C8U, and UIUC New Frontiers Initiative. Any opinions, findings, and conclusions or recommendations expressed in this publication are those of the author(s) and do not necessarily reflect the views of the funding agencies.
\bibliography{anthology}
\bibliographystyle{acl_natbib}
\clearpage
\appendix
\begin{table}[!tp]
\centering
\resizebox{1.0\linewidth}{!}{%
\begin{tabular}{@{}p{1em}PPPPPPP@{}}
\toprule
$k$   & \multicolumn{1}{c}{RF}      & \multicolumn{1}{c}{LO}      & \multicolumn{1}{c}{PR}      & \multicolumn{1}{c}{FL}      &  \multicolumn{1}{c}{$\mathbf{KL_{rf}}$} & \multicolumn{1}{c}{MM}               & \multicolumn{1}{c}{$\mathbf{KL_{mm}}$} \\ \midrule
\multicolumn{8}{c}{Candidate Keywords from Springer}                                                                                                                                                         \\ \midrule
10  & 2.0691  & 2.1157  & 1.1273  & 1.0852  & 1.0998                                                 & 2.1401           & \textbf{2.1432}                                            \\
20  & 2.1745  & 2.2391  & 2.2557  & 2.1745  & 2.1805                                                 & 2.2915           & \textbf{3.3260}                                            \\
30  & 2.2302  & 3.3938  & 3.4220  & 2.2261  & 2.2556                                                 & 3.4259           & \textbf{3.4262}                                            \\
40  & 2.2873  & 3.5040  & 3.5053  & 2.2847  & 2.3223                                                 & 5.5379           & \textbf{5.5404}                                            \\
50  & 2.3591  & 5.6136  & 3.6136  & 2.3591  & 2.3689                                                 & 5.6346           & \textbf{5.6576}                                            \\
100 & 3.7701  & 7.0986  & 6.0387  & 3.7601  & 3.7846                                                 & \textbf{8.1300}  & 7.1193                                                     \\
200 & 6.4205  & 12.0418 & 8.8759  & 5.3433  & 6.3992                                                 & \textbf{12.1043} & 12.0591                                                    \\
500 & 13.3390 & 19.4237 & 13.9664 & 13.2692 & 12.2365                                                & \textbf{19.4987} & 19.4857                                                    \\ \midrule
\multicolumn{8}{c}{Candidate Keywords from Aminer}                                                                                                                                                           \\ \midrule
10  & 1.1020  & 2.1205  & 1.1101  & 1.1073  & 1.1109                                                 & 2.1205           & \textbf{2.1645}                                            \\
20  & 2.1782  & 2.3050  & 1.2272  & 2.1699  & 2.1956                                                 & 3.3048           & \textbf{3.3079}                                            \\
30  & 2.2433  & 3.4320  & 2.3747  & 2.2508  & 2.2795                                                 & 3.4394           & \textbf{4.4522}                                            \\
40  & 2.3159  & 3.5312  & 3.5345  & 2.3012  & 2.3944                                                 & 5.5693           & \textbf{5.5700}                                            \\
50  & 2.4617  & 5.6519  & 4.6396  & 3.4657  & 2.4675                                                 & 5.6463           & \textbf{5.6829}                                            \\
100 & 3.8432  & 8.1523  & 6.1264  & 3.8038  & 3.8429                                                 & \textbf{9.1788}  & 8.1679                                                     \\
200 & 6.5778  & 12.1533 & 8.9075  & 6.5320  & 6.5431                                                 & \textbf{12.2007} & 12.1013                                                    \\
500 & 13.668  & 19.5232 & 15.1445 & 13.4622 & 12.4942                                                & 19.5580          & \textbf{20.5255}                                           \\ \bottomrule
\end{tabular}}
\caption{Results of selected summary keywords' correspondence with arxiv category keywords}
\label{tab:domain_summ_nmi_sp_am}
\vspace{-5mm}
\end{table}

\section{Proof of Theorem \ref{thm:greedy_opt}}
\label{sec:appendix_proof}
To prove this, we need to first show that $\phi(\cdot)$ is \textit{submodular} and \textit{monotone}. As, we are concerned on minimizing $\phi(\cdot)$, it is equivalent to maximizing $F(\cdot) = -\phi(\cdot)$. Hence, it is sufficient to prove that $F(.)$ is submodular and monotone.
Let, $X\subseteq Y \subseteq V$ and $v \in V$, then we get
\resizebox{1.0\linewidth}{!}{
\begin{minipage}{\linewidth}
\begin{align*}
    &F(X \cup \{v\}) - F(\{X\})  = \\ &\sum_{t_i\in V} p(t_i) \log \frac{\sum_{t_j \in X\cup \{v\}}p(t_i|t_j)p(t_j)}{p(t_i)}
    \\ &- \sum_{t_i\in V} p(t_i) \log \frac{\sum_{t_j \in X}p(t_i|t_j)p(t_j)}{p(t_i)}\\ 
    &= \sum_{t_i\in V} p(t_i) \log \frac{\sum_{t_j \in X\cup \{v\}}p(t_i|t_j)p(t_j)}{\sum_{t_j \in X}p(t_i|t_j)p(t_j)}\\
    &= \sum_{t_i\in V} p(t_i) \log \frac{\sum_{t_j \in X}p(t_i|t_j)p(t_j) + p(t_i|v)p(v)}{\sum_{t_j \in X}p(t_i|t_j)p(t_j)}\\
    &= \sum_{t_i\in V} p(t_i) \log (1+\frac{p(t_i|v)p(v)}{\sum_{t_j \in X}p(t_i|t_j)p(t_j)}).\\
    &\text{Similarly, }
    F(Y \cup \{v\}) - F(\{Y\})  = \\&\sum_{t_i\in V} p(t_i) \log (1+\frac{p(t_i|v)p(v)}{\sum_{t_j \in Y}p(t_i|t_j)p(t_j)}).\\
    & \text{As } X\subseteq Y, \text{then, }\\ 
    &\sum_{t_j \in X}p(t_i|t_j)p(t_j) \leq \sum_{t_j \in Y}p(t_i|t_j)p(t_j).\\
\end{align*}
\end{minipage}
}
Therefore,  $F(X \cup \{v\}) - F(\{X\}) \geq F(Y \cup \{v\}) - F(\{Y\})$ which proves that $F(\cdot)$ is \textit{submodular}.
Moreover, we can show that $F(Y) - F(X) = \sum_{t_i \in V} p(t_i)\log\frac{\sum_{t_j \in Y}p(t_i|t_j)p(t_j)}{\sum_{t_j \in X}p(t_i|t_j)p(t_j)} \geq 1$. Hence, $F(Y) \geq F(X) \text{ for } X\subseteq Y\subseteq V$ which proves that $F(\cdot)$ is monotone. Therefore, it proves that minimizing $\phi(\cdot)$ using simple greedy algorithm guarantees an approximation of $1 - \frac{1}{e}$.


\begin{table}[!tp]
\tiny
\resizebox{1.0\linewidth}{!}{%
\begin{tabular}{p{0.06\linewidth}p{0.74\linewidth}}
\toprule
 & \multicolumn{1}{c}{Selected 20 keywords in CS} \\ \midrule
RF                                            & model, method, paper, problem, approach, \textbf{algorithm}, datum, \textbf{network}, \textbf{system}, \textbf{performance}, task, result, \textbf{image}, number, user, application, time, \textbf{dataset}, \textbf{graph}, work                                                                               \\ \midrule
LO                                            & task, \textbf{algorithm}, user, \textbf{network}, \textbf{performance}, \textbf{dataset}, \textbf{image}, problem, \textbf{training}, approach, \textbf{deep learning}, method, \textbf{node}, \textbf{agent}, language, \textbf{neural network}, paper, video, challenge, \textbf{architecture}                                            \\ \midrule
PR                                            & \textbf{image}, \textbf{graph}, \textbf{dataset}, user, method, model, \textbf{network}, task, \textbf{algorithm}, datum, problem, \textbf{system}, training, channel, node, performance, object, agent, deep learning, language                                                       \\ \midrule
FL                                            & \textbf{image}, model, paper, problem, method, network, datum, approach, \textbf{algorithm}, \textbf{system}, user, performance, result, \textbf{graph}, task, application, number, work, time, dataset                                                                                  \\ \midrule
{$\mathbf{KL_{rf}}$} & model, method, problem, \textbf{system}, \textbf{network}, datum, \textbf{algorithm}, paper, \textbf{image}, task, user, approach, performance, graph, application, dataset, time, result, number, information                                                                                    \\ \midrule
MM                                            &task, \textbf{algorithm}, user, \textbf{network}, \textbf{performance}, \textbf{dataset}, image, \textbf{training}, \textbf{deep learning}, \textbf{node}, \textbf{agent}, language, \textbf{neural network}, video, \textbf{architecture}, challenge, \textbf{robot}, real world, \textbf{attack}, \textbf{learning} \\ \midrule
{$\mathbf{KL_{mm}}$} & task, user, \textbf{algorithm}, \textbf{dataset}, \textbf{network}, \textbf{performance}, \textbf{image}, \textbf{training}, \textbf{agent}, language, \textbf{deep learning}, \textbf{attack}, \textbf{robot}, video, challenge, \textbf{node}, \textbf{neural network}, \textbf{query}, \textbf{code}, \textbf{machine learning}  \\
\bottomrule
\end{tabular}}
{\raggedright \centering Keywords distinctive to the CS domain are \textbf{highlighted} (annotated by authors). \par}
\vspace{-1mm}
\caption{Summary keywords selected from Springer candidate keywords in CS domain}
\label{tab:cs_summary_sp}
\vspace{-1mm}
\end{table}

\begin{table}[!tp]
\tiny
\resizebox{1.0\linewidth}{!}{%
\begin{tabular}{p{0.06\linewidth}p{0.74\linewidth}}
\toprule
 & \multicolumn{1}{c}{Selected 20 keywords in CS} \\ \midrule
RF                                            & model, method, \textbf{network}, \textbf{algorithm},   \textbf{system}, datum, problem, user, \textbf{image}, time, \textbf{graph}, application, \textbf{performance},   state, feature, \textbf{dataset}, number, art, work, information                           \\ \midrule
LO                                            & \textbf{network}, user, \textbf{algorithm}, \textbf{dataset}, art, \textbf{performance}, training,   \textbf{image}, task, \textbf{deep learning}, \textbf{node}, \textbf{learning}, \textbf{agent}, \textbf{attack}, \textbf{neural network},   language, video, problem, \textbf{robot}, \textbf{graph}                 \\ \midrule
PR                                            & \textbf{image}, art, \textbf{graph}, \textbf{dataset}, state, user, model, \textbf{network},   method, \textbf{algorithm}, datum, vertex, training, channel, \textbf{system}, feature, \textbf{node},   \textbf{deep learning}, experiment, object                           \\ \midrule
FL                                            & art, model, method, \textbf{network}, \textbf{algorithm}, \textbf{image}, datum, \textbf{system},   problem, user, \textbf{graph}, application, \textbf{performance}, time, work, number, \textbf{dataset},   feature, order, experiment                            \\ \midrule
$\mathbf{KL_{rf}}$ & model, method, \textbf{algorithm}, \textbf{network}, \textbf{system}, datum, \textbf{image}, user,   problem, \textbf{graph}, \textbf{dataset}, application, \textbf{performance}, time, \textbf{feature}, \textbf{agent}, art,   information, number, \textbf{training}                       \\ \midrule
MM                                            & \textbf{network}, user, \textbf{algorithm}, \textbf{dataset}, art, \textbf{performance}, \textbf{training},   \textbf{image}, task, \textbf{deep learning}, \textbf{node}, \textbf{learning}, \textbf{agent}, \textbf{neural network}, \textbf{attack},   language, video, \textbf{robot}, \textbf{architecture}, \textbf{machine learning} \\ \midrule
$\mathbf{KL_{mm}}$ & \textbf{dataset}, user, \textbf{network}, \textbf{algorithm}, \textbf{training}, \textbf{image}, \textbf{agent},   task, \textbf{performance}, \textbf{attack}, art, language, \textbf{deep learning}, \textbf{robot}, \textbf{learning},   video, \textbf{node}, \textbf{machine learning}, \textbf{code}, \textbf{neural network}  \\
\bottomrule
\end{tabular}}
{\raggedright \centering Keywords distinctive to the CS domain are \textbf{highlighted} (annotated by authors). \par}
\vspace{-1mm}
\caption{Summary keywords selected from Aminer candidate keywords in CS domain}
\label{tab:cs_summary_am}
\vspace{-1mm}
\end{table}

\begin{table}[!tp]
\resizebox{1.0\linewidth}{!}{%
\begin{tabular}{@{}l|l|l|l@{}}
\toprule
Religion                                                                                          & Recreation                                                                                                  & Science                                                                                   & Politics                                                                                                \\ \midrule
\begin{tabular}[c]{@{}l@{}}talk.religion.misc\\ alt.atheism\\ soc.religion.christian\end{tabular} & \begin{tabular}[c]{@{}l@{}}rec.autos\\ rec.motorcycles\\ rec.sport.baseball\\ rec.sport.hockey\end{tabular} & \begin{tabular}[c]{@{}l@{}}sci.crypt\\ sci.electronics\\ sci.med\\ sci.space\end{tabular} & \begin{tabular}[c]{@{}l@{}}talk.politics.misc\\ talk.politics.guns\\ talk.politics.mideast\end{tabular} \\ \bottomrule
\end{tabular}}
\vspace{-1mm}
\caption{Subtopics in each of the four topics in the newsgroup dataset}
\label{tab:subtopics_ng}
\vspace{-3mm}
\end{table}

\begin{table*}[!tp]
\scriptsize
\centering
\resizebox{1.0\linewidth}{!}{%
\begin{tabular}{p{0.05\textwidth}p{0.2\textwidth}p{0.2\textwidth}p{0.2\textwidth}p{0.2\textwidth}}
\toprule
                                                           & \multicolumn{1}{c}{Religion}                                                                                                                                                                             & \multicolumn{1}{c}{Recreation}                                                                                                                                        & \multicolumn{1}{c}{Science}                                                                                                                                                                    & \multicolumn{1}{c}{Politics}                                                                                                                                                                  \\ \midrule
RF                                                         & thing, \textbf{church}, life, word, man, \textbf{religion}, \textbf{bible}, \textbf{faith},   question, \textbf{belief}, book, point, law, evidence, \textbf{sin}, reason, world, truth,   child, \textbf{god}                                                          & \textbf{game}, \textbf{car}, \textbf{team}, \textbf{player}, \textbf{bike}, \textbf{season}, \textbf{point}, \textbf{hockey}, problem,   lot, \textbf{goal}, \textbf{baseball}, guy, \textbf{engine}, power, number, year, line, question, \textbf{run}                           & key, information, thing, government, \textbf{space}, \textbf{encryption}, datum,   \textbf{clipper}, \textbf{chip}, case, number, phone, \textbf{bit}, \textbf{privacy}, drug, earth, power,   \textbf{security}, \textbf{program}, disease                            & \textbf{government}, \textbf{gun}, child, state, law, country, man, \textbf{president},   case, \textbf{war}, group, fact, \textbf{firearm}, number, \textbf{crime}, question, \textbf{weapon}, world,   history, population                                 \\\midrule
LO                                                         & \textbf{church}, \textbf{bible}, \textbf{faith}, \textbf{religion}, \textbf{belief}, \textbf{sin}, \textbf{god}, \textbf{scripture},   life, word, \textbf{atheist}, truth, \textbf{atheism}, \textbf{homosexuality}, love, man, evidence, son,   \textbf{morality}, book                                            & \textbf{game}, \textbf{team}, \textbf{car}, \textbf{player}, \textbf{bike}, \textbf{season}, \textbf{hockey}, \textbf{baseball},   \textbf{playoff}, \textbf{engine}, \textbf{goal}, \textbf{pitcher}, \textbf{tire}, \textbf{run}, pen, \textbf{league}, \textbf{puck}, \textbf{motorcycle},   dog, \textbf{clutch}                    & key, \textbf{encryption}, \textbf{space}, \textbf{clipper}, \textbf{privacy}, \textbf{satellite}, \textbf{mission},   \textbf{disease}, \textbf{shuttle}, phone, \textbf{orbit}, \textbf{escrow}, \textbf{moon}, \textbf{cancer}, \textbf{algorithm}, \textbf{spacecraft},   \textbf{security}, launch, \textbf{vitamin}, health               & \textbf{gun}, \textbf{government}, \textbf{firearm}, \textbf{president}, country, \textbf{weapon}, \textbf{crime},   village, \textbf{soldier}, \textbf{genocide}, \textbf{war}, population, state, child, police, \textbf{turk},   \textbf{massacre}, \textbf{handgun}, compound, new york               \\ \midrule
PR                                                         & man, word, thing, life, world, history, \textbf{church}, book,   question, \textbf{bible}, \textbf{faith}, point, truth, reason, matter, law, year, \textbf{religion},   earth, mind                                                         & \textbf{game}, \textbf{team}, \textbf{player}, \textbf{car}, \textbf{season}, \textbf{goal}, \textbf{point}, \textbf{hockey}, shot,   year, power, number, \textbf{engine}, \textbf{bike}, \textbf{win}, \textbf{league}, speed, line, \textbf{run}, end                                   & information, datum, year, study, number, united states, \textbf{space},   nature, \textbf{security}, mail, government, thing, \textbf{encryption}, case, archive, key,   life, book, law, \textbf{science}                         & \textbf{government}, man, group, \textbf{war}, world, village, child, fact,   year, history, life, state, house, end, woman, home, \textbf{power}, arm, law,   population                                                \\ \midrule
FL                                                         & man, thing, \textbf{church}, life, word, \textbf{religion}, book, question,   history, point, \textbf{bible}, law, evidence, reason, \textbf{sin}, world, \textbf{belief}, child,   \textbf{faith}, case                                                       & \textbf{game}, \textbf{car}, \textbf{team}, \textbf{player}, \textbf{bike}, \textbf{season}, pit, problem, \textbf{hockey},   lot, power, \textbf{point}, \textbf{baseball}, \textbf{engine}, \textbf{goal}, \textbf{run}, question, guy, standing,   \textbf{speed}                       & information, \textbf{space}, key, study, thing, government, datum,   year, case, number, patient, \textbf{software}, power, book, archive, food, \textbf{clipper},   \textbf{mission}, hicnet medical newsletter page, \textbf{encryption}  & \textbf{government}, man, village, \textbf{gun}, \textbf{president}, \textbf{sumgait}, history,   state, case, child, \textbf{law}, world, country, population, fact, \textbf{war}, los angeles,   number, group, year                              \\ \midrule
{$\mathbf{KL_{rf}}$}     & thing, life, \textbf{church}, word, \textbf{belief}, man, question, \textbf{religion},   \textbf{bible}, \textbf{faith}, book, law, evidence, point, \textbf{sin}, world, reason, child, truth,   \textbf{god}                                                          & \textbf{game}, \textbf{car}, \textbf{team}, \textbf{bike}, \textbf{player}, \textbf{point}, problem, \textbf{season}, lot,   \textbf{hockey}, \textbf{engine}, guy, power, \textbf{baseball}, question, \textbf{goal}, number, list, road,   year                        & key, information, \textbf{space}, thing, government, \textbf{encryption}, datum,   case, \textbf{clipper}, number, \textbf{chip}, \textbf{disease}, power, phone, \textbf{earth}, drug, \textbf{program},   \textbf{bit}, book, \textbf{privacy}                                & \textbf{government}, \textbf{gun}, child, state, country, man, law, president,   \textbf{war}, case, group, fact, question, \textbf{crime}, population, number, world, \textbf{firearm},   history, woman                                  \\ \midrule
MM                                                         & \textbf{bible}, \textbf{church}, \textbf{faith}, \textbf{sin}, \textbf{scripture}, \textbf{atheism}, \textbf{god}, \textbf{gospel},   \textbf{christianity}, \textbf{prophecy}, \textbf{mcconkie}, \textbf{jesus christ}, \textbf{prophet}, \textbf{new testament},   \textbf{atheist}, \textbf{disciple}, \textbf{holy spirit}, \textbf{theist}, \textbf{christian}, \textbf{homosexuality} & \textbf{team}, pen, \textbf{player}, \textbf{bike}, \textbf{hockey}, \textbf{season}, \textbf{second period}, \textbf{puck},   \textbf{playoff}, \textbf{first period}, \textbf{ranger}, \textbf{schedule}, \textbf{pitcher}, \textbf{baseball}, \textbf{nhl}, \textbf{cub}, \textbf{tire},   \textbf{injury}, \textbf{league}, respect & \textbf{encryption}, \textbf{clipper}, \textbf{privacy}, \textbf{satellite}, \textbf{shuttle}, \textbf{orbit},   \textbf{vitamin}, \textbf{infection}, \textbf{escrow}, \textbf{moon}, \textbf{pgp}, \textbf{mission}, \textbf{spacecraft}, \textbf{cryptography},   \textbf{cancer}, \textbf{circuit}, \textbf{astronaut}, \textbf{asteroid}, \textbf{cipher}, \textbf{telescope} & \textbf{gun}, \textbf{firearm}, \textbf{soldier}, village, \textbf{genocide}, \textbf{bayonet}, \textbf{turk},   \textbf{handgun}, \textbf{massacre}, new york, \textbf{tartar}, \textbf{homicide}, \textbf{civilian}, \textbf{weapon}, \textbf{human right},   \textbf{gun control}, \textbf{bullet}, \textbf{troop}, \textbf{ottoman}, \textbf{sumgait}       \\ \midrule
{$\mathbf{KL_{mm}}$} & \textbf{bible}, \textbf{church}, \textbf{faith}, \textbf{sin}, \textbf{scripture}, \textbf{atheism}, \textbf{god}, \textbf{gospel},   \textbf{jesus christ}, \textbf{prophecy}, \textbf{christianity}, \textbf{new testament}, \textbf{mcconkie}, \textbf{prophet}, \textbf{holy   spirit}, \textbf{morality}, \textbf{theist}, \textbf{disciple}, \textbf{homosexuality}, \textbf{atheist}  & \textbf{team}, pen, \textbf{bike}, \textbf{player}, \textbf{season}, \textbf{hockey}, \textbf{second period},   \textbf{playoff}, \textbf{pitcher}, \textbf{puck}, \textbf{schedule}, \textbf{ranger}, \textbf{cub}, \textbf{baseball}, \textbf{tire}, \textbf{clutch}, \textbf{first   period}, favor, respect, \textbf{nhl}  & \textbf{encryption}, \textbf{satellite}, \textbf{clipper}, \textbf{vitamin}, \textbf{shuttle}, \textbf{infection},   \textbf{orbit}, \textbf{moon}, \textbf{cancer}, \textbf{privacy}, \textbf{spacecraft}, \textbf{circuit}, \textbf{mission}, \textbf{pgp}, \textbf{escrow},   \textbf{allergy}, \textbf{yeast}, \textbf{cryptography}, \textbf{diet}, \textbf{solar sail}      & \textbf{gun}, village, \textbf{firearm}, \textbf{soldier}, \textbf{genocide}, \textbf{turk}, new york,   \textbf{massacre}, \textbf{human right}, \textbf{handgun}, \textbf{bayonet}, \textbf{civilian}, \textbf{croat}, \textbf{tartar}, \textbf{weapon}, \textbf{gun   control}, \textbf{troop}, \textbf{homicide}, \textbf{well regulated}, \textbf{sumgait} \\ \bottomrule
\end{tabular}}
{\raggedright \centering Keywords distinctive to the subtopics in a respected topic are \textbf{highlighted} (annotated by authors). \par}
\caption{Summary keywords selected by different algorithms on four topics from newsgroups dataset}
\label{tab:newsgroup_summary}
\end{table*}

\section{Additional Quantitative Results on CS Domain}
\label{sec:additional_quancs}
Results are shown in Table \ref{tab:domain_summ_nmi_sp_am}.

\section{Additional Qualitative Results on CS Domain}
\label{sec:additional_cs}
Results are shown in Tables \ref{tab:cs_summary_sp} and \ref{tab:cs_summary_am}.

\section{Evaluation Using Newsgroup Dataset}
\label{sec:newsgroup_result}
We use newsgroup dataset covering four known topics named \textit{Religion}, \textit{Recreation}, \textit{Science} and \textit{Politics}. In this study, we split the whole dataset into these four topic groups represented by their corpus and use the whole newsgroup dataset as our background corpus. Table \ref{tab:subtopics_ng} shows the subtopics for each of the four topics. For each topic, we show the selected top 20 keywords using different algorithms in Table \ref{tab:newsgroup_summary}.
This study aims to evaluate the capability of the proposed models to select distinctive keywords for each topic compared to the baselines.
We can see that almost all the keywords selected by our methods MM and $\mathbf{KL_{mm}}$ are distinctive for each topic relating closely with respected subtopics shown in Table \ref{tab:subtopics_ng} and do not overlap with other topics. Similarly, as previously, the results from LO come close to ours with some anomalies. For instance, our methods select informative keywords like \textit{jesus christ}, \textit{holy spirit} and \textit{new testament} for \textit{religion} topic rather than non-distinctive keywords like \textit{word}, \textit{man} and \textit{son}.

\end{document}